\let\keptmaketitle\maketitle
\let\maketitle\keptmaketitle
\def\BibTeX{{\rm B\kern-.05em{\sc i\kern-.025em b}\kern-.08em
    T\kern-.1667em\lower.7ex\hbox{E}\kern-.125emX}}
\newcommand{\cNP}[0]{\ensuremath{\mathsf{NP}}\xspace}
\newcommand{\cAPX}[0]{\ensuremath{\mathsf{APX}}\xspace}
\newcommand{\iso}[0]{\psi}
\newcommand*{\thour}{\text{h\,} }
\newcommand*{\tmin}{^{\prime}\mkern-1.2mu}
\newcommand*{\tsec}{^{\prime\prime}\mkern-1.2mu}
\newcommand{\univ}{\ensuremath{\mathcal{X}}\xspace}
\newcommand{\relmiddle}[1]{\mathrel{}\middle#1\mathrel{}}
\DeclarePairedDelimiter\set{\lbrace}{\rbrace}%
\DeclareMathOperator{\depth}{depth}
\DeclarePairedDelimiter\multiset{\lbrace\!\!\lbrace}{\rbrace\!\!\rbrace}%
\newcommand{\norm}[1]{\left\lVert#1\right\rVert}
\newcommand{\lab}[0]{c}
\let\vec\mathbf
\renewcommand*\env@matrix[1][*\c@MaxMatrixCols c]{%
  \hskip -\arraycolsep
  \let\@ifnextchar\new@ifnextchar
  \array{#1}}
\let\oldnl\nl
\newcommand{\nonl}{\renewcommand{\nl}{\let\nl\oldnl}}
\newcommand{\Procedure}[2]{\BlankLine\nonl\customProcedure{#1}{#2}}
\theoremstyle{plain}
\newtheorem{theorem}{Theorem}
\newtheorem{lemma}[theorem]{Lemma}
\theoremstyle{definition}
\def\ps@IEEEtitlepagestyle{%
  \def\@oddfoot{\mycopyrightnotice}%
  \def\@evenfoot{}%
}
\def\mycopyrightnotice{%
\fbox{\parbox{.985\textwidth}{%
\footnotesize%
\textcopyright\ 2019 IEEE. Personal use of this material is permitted. 
Permission from IEEE must be obtained for all other uses, in any current 
or future media, including reprinting/republishing this material for 
advertising or promotional purposes, creating new collective works, for 
resale or redistribution to servers or lists, or reuse of any copyrighted
component of this work in other works.%
}}}
\begin{document}

\title{Computing Optimal Assignments in Linear Time for Approximate Graph Matching}

\author{%
\IEEEauthorblockN{%
Nils M.~Kriege\IEEEauthorrefmark{1}, %
Pierre-Louis Giscard\IEEEauthorrefmark{2}, %
Franka Bause\IEEEauthorrefmark{1}, %
Richard C.~Wilson\IEEEauthorrefmark{3}
}
\IEEEauthorblockA{%
\IEEEauthorrefmark{1}%
\textit{Department of Computer Science}, %
\textit{TU Dortmund University}, %
Dortmund, Germany \\%
\{nils.kriege, franka.bause\}@tu-dortmund.de %
}%
\IEEEauthorblockA{%
\IEEEauthorrefmark{2}%
\textit{LMPA Joseph Liouville},
\textit{Université Littoral Côte d'Opale}, %
Calais, France \\
giscard@univ-littoral.fr
}%
\IEEEauthorblockA{%
\IEEEauthorrefmark{3}%
\textit{Department of Computer Science}, %
\textit{University of York}, %
York, United Kingdom \\%
richard.wilson@york.ac.uk
}%
}

\maketitle

\begin{abstract}
Finding an optimal assignment between two sets of objects is a fundamental 
problem arising in many applications, including the matching of `bag-of-words' 
representations in natural language processing and computer vision.
Solving the assignment problem typically requires cubic time and its pairwise
computation is expensive on large datasets. 
In this paper, we develop an algorithm which can find an optimal assignment in 
linear time when the cost function between objects is represented by a tree distance. 
We employ the method to approximate the edit distance between two graphs by
matching their vertices in linear time.
To this end, we propose two tree distances, the first of which reflects 
discrete and structural differences between vertices, and the second of which 
can be used to compare continuous labels. 
We verify the effectiveness and efficiency of our methods using synthetic and 
real-world datasets.
\end{abstract}

\begin{IEEEkeywords}
assignment problem, graph matching, graph edit distance, tree distance
\end{IEEEkeywords}

\section{Introduction}
Vast amounts of data are now available for machine learning, including text 
documents, images, graphs and many more. Learning from such data typically 
involves computing a similarity or distance function between the data objects.
Since many of these datasets are large, the efficiency of the comparison methods is 
critical. This is particularly challenging since taking the structure adequately 
into account often is a hard problem.
For example, no polynomial-time algorithms are known even for the basic task 
to decide whether two graphs have the same structure~\cite{Garey1979}.
Therefore, the pragmatic approach of describing such data with a `bag-of-words' 
or `bag-of-features' is commonly used. In this representation, a series of objects 
are identified in the data and each object is described by a label or feature. 
The labels are placed in a bag where the order in which they appear does not matter.

In the most basic form, such bags can be represented by histograms or feature
vectors, two of which are compared by counting the number of co-occurrences of a 
label in the bags. This is a common approach not only for images and
text, but also for graph comparison, where a number of graph kernels have been proposed
which use different substructures as elements of the bag~\cite{Vishwanathan2010,Kriege2019}.
In the more general case, two bags of features are compared by summing over all pairs
of features weighted by a similarity function between the features.
However, in both cases, the method is not ideal, since each feature corresponds to a 
specific element in the data object, and so can correspond to no more than one element in a 
second data object. 
The co-occurrence counting method allows each feature to match to multiple features in 
the other dataset.

A different approach is to explicitly find the best correspondence between the features
of two bags. This can be achieved by solving the (linear) assignment problem 
in $O(n^3)$ time using Hungarian-type algorithms~\citep{Burkard2012}.
When we assume weights to be integers within the range of $[0,N]$, scaling algorithm 
become applicable such as~\cite{Duan2012}, which requires
$O(n^{2.5} \log N)$ time. Several authors studied a geometric version of the problem,
where the objects are points in $\mathbb{R}^d$ and the total distance is to be minimised. 
No subquadratic exact algorithm for this task is known, but efficient approximation 
algorithms exist~\cite{Sharathkumar2012}.
This is also the case for various other problem variants, see \cite{Duan2014} and 
references therein.
These algorithms are typically involved and focus on theoretical guarantees. For practical 
applications often standard algorithms with cubic running time, simple greedy strategies or methods
specifically designed for one task are used.

We briefly summarise the use of assignment methods for comparing structured data
in machine learning with a focus on graphs.
While most kernels for such data are based on co-occurrence counting, there has 
been growing interest in deriving kernels from optimal assignments in recent years.
The \emph{pyramid match kernel} was proposed to approximate correspondences between 
bags of features in $\mathbb{R}^d$ by employing a space-partitioning tree structure 
and counting how often points fall into the same bin~\cite{Grauman2007}.
For graphs, the \emph{optimal assignment kernel} was proposed, which establishes a 
correspondence between the vertices of two graphs using the Hungarian algorithm~\cite{Frohlich2005}.
However, it was soon realised that this approach does not lead to valid kernels 
in the general case~\citep{Vishwanathan2010}. 
Therefore, \citet{Johansson2015} derived kernels from optimal assignments by 
first sampling a fixed set of so-called \emph{landmarks} and representing
graphs by their optimal assignment similarities to landmarks.
\citet{Kriege2016b} demonstrated that a specific choice of a weight function (derived 
from a hierarchy) does in fact generate a valid kernel for the optimal 
assignment method and allows computation in linear time. The approach is not designed 
to actually construct the assignment.

The term \emph{graph matching} refers to a diverse set of techniques which
typically establish correspondences between the vertices (and edges) of two 
graphs~\citep{Conte2004,Foggia2014}. This task is closely related to
the classical \cNP-hard \emph{maximum common subgraph} problem, which asks for the largest
graph that is contained as subgraph in two given graphs~\cite{Garey1979}.
Exact algorithms for this problem haven been studied extensively, e.g., for 
applications in cheminformatics~\cite{Kriege2019x}, where small molecular graphs
with about 20 vertices are compared.
The term \emph{network alignment} is commonly used in bioinformatics, where large
networks with thousands of vertices are compared such as protein-protein 
interaction networks. Methods applicable to such graphs typically cannot guarantee
that an optimal solution is found and often solve the assignment problem as a 
subroutine, e.g.,~\cite{Singh2008,Zhang2016}.
In the following we will focus on the \emph{graph edit distance}, which is one 
of the most widely accepted approaches to graph matching with applications ranging 
from cheminformatics to computer vision~\citep{Stauffer2017}. 
It is defined as the minimum cost of edit operations required to transform one 
graph into another graph. The concept has been proposed for pattern recognition 
tasks more than 30 years ago~\citep{Sanfeliu1983}.
However, its computation is $\mathsf{NP}$-hard, since it generalises the maximum 
common subgraph problem~\citep{Bunke1997}.
The graph edit distance is closely related to the notoriously hard quadratic 
assignment problem~\cite{Bougleux2017}.
Recently several elaborated exact algorithms for computing the graph edit distance
have been proposed~\cite{Gouda2016,Lerouge2017,Chen2019}.
Binary linear programming formulations in combination with highly-optimised 
general purpose solvers are among the most efficient approaches, but are still 
limited to small graphs~\citep{Lerouge2017}.
Even a restricted special case of the graph edit distance is \cAPX-hard~\citep{Lin1994},
i.e., there is a constant $c>1$, such that no polynomial-time algorithm
can approximate it within the factor $c$, unless \mbox{$\mathsf{P}$=\cNP}.
However, heuristics based on the assignment problem turned 
out to be effective tools~\citep{Riesen2009a} and are widely used in 
practice~\citep{Stauffer2017}.
The original approach requires cubic running time, which is still not feasible for large
graphs. Therefore, it has been proposed to use non-exact algorithms for solving 
the assignment problem.
Simple greedy algorithms reduce the running time to 
$O(n^2)$~\cite{Riesen2015a,Riesen2015}.
For large graphs the quadratic running time is still problematic in 
practice.
Moreover, many applications require a large number of distance computations, 
e.g., for solving classification tasks or performing similarity search in graph 
databases~\citep{Zeng2009}.

\subsubsection*{Our contribution}
We develop a practical algorithm, which solves the assignment problem exactly in linear 
time when the cost function is a tree metric. 
A tree metric can be represented compactly by a weighted tree of linear size, which we 
use for constructing an optimal assignment.
We show how to embed sets of objects in an $\ell_1$ space preserving the optimal 
assignment costs.
In order to demonstrate that our approach is---despite its simplicity---suitable 
for challenging real-world problems, we use it for approximating the graph edit distance.
To this end, we propose two techniques for generating trees representing cost functions:
\begin{inparaenum}[(i)]
  \item based on Weisfeiler-Lehman refinement to quantify discrete and structural 
        differences,
  \item based on hierarchical clustering for continuous vertex attributes.
\end{inparaenum}
We show experimentally that our linear time assignment algorithm scales to large graphs 
and datasets. Our approach outperforms both exact and approximate methods for 
computing the graph edit distance in terms of running time and provides 
state-of-the-art classification accuracy.
For some datasets with discrete labels our method even beats these approaches 
in terms of accuracy.

\section{Fundamentals}
We summarise basic concepts and results on graphs, tree distances, the assignment problem
and the graph edit distance.

\subsection{Graph theory}
An undirected \emph{graph} $G=(V,E)$ consists of a finite set $V(G) = V$ of \emph{vertices}
and a finite set $E(G) = E$ of \emph{edges}, where each edge connects two distinct vertices.
We denote an edge connecting a vertex $u$ and a vertex $v$ by $uv$ or $vu$, where both refers
to the same edge.
Two vertices $u$ and $v$ are said to be \emph{adjacent} if $uv \in E$ and
referred to as \emph{endpoints} of the edge $uv$. 
The vertices adjacent to a vertex $v$ in $G$ are denoted by 
$N_G(v) = \{u \in V(G) \mid uv \in E(G)\}$ and referred to as \emph{neighbours} of $v$.
A \emph{path} of length $n$ is a sequence of vertices $(v_0, \dots, v_n)$ such 
that $v_{i}v_{i+1} \in E$ for $0 \leq i < n$.
A \emph{weighted} graph is a graph $G$ endowed with a weight function $w : E(G) \to \mathbb{R}$.
The length of a path in a weighted graph refers to the sum of weights of the edges
contained in the path.
A graph $G'=(V',E')$ is a \emph{subgraph} of a graph $G=(V,E)$, written 
$G' \subseteq G$, if $V' \subseteq V$ and $E' \subseteq E$.
Let $V' \subseteq V$, then $G'=(V',E')$ with $E' = \{ uv \in E \mid u,v \in V' \}$
is said to be the subgraph \emph{induced} by $V'$ in $G$ and is denoted by $G[V']$.
An \emph{isomorphism} between two graphs $G$ and $H$ is a bijection 
$\iso : V(G) \to V(H)$ such that
$uv \in E(G) \Leftrightarrow \iso(u)\iso(v) \in E(H)$ for all $u,v \in V(G)$.
Two graphs $G$ and $H$ are said to be \emph{isomorphic} if
an isomorphism between $G$ and $H$ exists.
An \emph{automorphism} of a graph $G$ is an isomorphism $\iso : V(G) \to V(G)$.

\subsection{Tree metrics and ultrametrics}
A dissimilarity function $d : \univ \times \univ$ is a metric on \univ, if it is
\begin{inparaenum}[(i)]
 \item non-negative, 
 \item symmetric, 
 \item zero iff two objects are equal and
 \item satisfies the triangle inequality. 
\end{inparaenum}
A metric $d$ on $\univ$ is an 
\emph{ultrametric} if it satisfies the strong triangle inequality 
$d(x,y)\leq \max\{d(x,z),d(z,y)\}$ for all $x,y,z\in \univ$.
A metric $d$ on $\univ$ is a \emph{tree metric} if 
$d(x,y)+d(v,w)\leq \max\{d(x,v)+d(y,w), d(x,w)+d(y,v)\}$ 
for all $v,w,x,y \in \univ$.

These restricted classes of distances can equivalently be defined in terms of
path lengths in weighted trees, which has been investigated in detail, e.g., in
phylogentics~\citep{Semple2003}.
A weighted tree $T$ with positive real-valued edge weights 
$w : E(T) \to \mathbb{R}_{>0}$ represents the distance 
$d : V(T) \times V(T) \to \mathbb{R}_{\geq0}$
defined as 
$d_{(T,w)}(u,v) = \sum_{e \in P(u,v)} w(e)$, 
where $P(u,v)$ denotes the unique path from $u$ to $v$, for all $u, v \in V(T)$.
For every ultrametric $d$ on $\univ$ there is a rooted tree $T$ with leaves $\univ$ 
and positive real-valued edge weights, such that 
\begin{inparaenum}[(i)]
 \item $d$ is the path length between leaves in $T$,
 \item all paths from any leaf to the root have equal length.
\end{inparaenum}
For every tree metric $d$ on $\univ$ there is a tree $T$ with $\univ \subseteq V(T)$
and positive real-valued edge weights, such that $d$ corresponds to the path 
lengths in $T$.
Note that an ultrametric always is a tree metric.
For the clarity of notation we distinguish between the elements of \univ and the 
nodes of a tree $T$ by introducing an injective map $\varphi : \univ \to V(T)$.
We will refer to both a label $u\in\univ$ and the associated node by the same 
letter, with the meaning clear from the context.
We consider the distance $d :\univ \times \univ \to \mathbb{R}_{\geq 0}$
defined as $d(x,y) = d_{(T,w)}(\varphi(x),\varphi(y))$.

\subsection{The assignment problem}
\label{assignmentproblem}
The assignment problem is a well-studied classical combinatorial problem~\citep{Burkard2012}.
Given a triple $(A, B, c)$, where $A$ and $B$ are sets of distinct objects 
with $|A| = |B| = n$ and $c : A \times B \to \mathbb{R}$ a cost function, the problem 
asks for a one-to-one correspondence $M$ between $A$ and $B$ with minimum costs. The cost 
of $M$ is $c(M) = \sum_{(a,b) \in M} c(a,b)$.
Assuming an arbitrary, but fixed ordering of the elements of $A$ and $B$, an assignment
instance can also be given by a cost matrix $\vec{C} \in \mathbb{R}^{n \times n}$, where
$c_{i,j} = c(a_i, b_j)$ and $a_i$ is the $i$th element of $A$ and $b_j$ is the $j$th 
element of $B$. Note that in this case the input $\vec{C}$ is of size $\Theta(n^2)$, 
where otherwise the input size depends on the representation of the cost function $c$.
The assignment problem is equivalent to finding a minimum weight perfect matching in a 
complete bipartite graph on the two sets $A$ and $B$ with edge weights according to $c$.
Unless $\vec{C}$ is sparse or contains only integral values from a bounded interval,
the best known algorithms require cubic time, which is achieved by the well-known
Hungarian method.

Here we consider the assignment problem for two sets of objects where
the objects are labelled by elements of $\univ$. There exists a labelling
$\ell : A\cup B \to \univ$ associating each object $a\in A\cup B$ with a label
$\ell(a)\in\univ$. Furthermore, we may associate objects with tree nodes using
the map $\varrho=\varphi\cdot\ell$.
We then have $c(a,b)=d(\ell(a),\ell(b))=d_{(T,w)}(\varrho(a),\varrho(b))$.  
We denote by $D^c_\mathrm{OA}(A,B)$ the cost of an optimal assignment between 
$A$ and $B$, and the assignment problem as the quadruple $(A,B,T,\varrho)$.

\subsection{The graph edit distance}\label{sec:ged}
The graph edit distance measures the minimum cost required to transform a graph 
into another graph by adding, deleting and substituting vertices and edges.
Each edit operation $o$ is assigned a cost $c(o)$, which may depend on the attributes
associated with the affected vertices and edges. A sequence of $k$ edit operations
$(o_1,\dots,o_k)$ that transforms a graph $G$ into another graph $H$ is called
an \emph{edit path} from $G$ to $H$. We denote the set of all possible edit
paths from $G$ to $H$ by $\Upsilon(G,H)$.
Let $G$ and $H$ be attributed graphs, the \emph{graph edit distance} from $G$ to 
$H$ is defined by
\begin{equation*}
 d(G,H) = \min \set*{ \sum_{i=1}^k\, c(o_i) \relmiddle|  (o_1,\dots,o_k) \in \Upsilon(G,H) } \,. 
\end{equation*}
In order to obtain a meaningful measure of dissimilarity for graphs, a cost
function must be tailored to the particular attributes that are present in the
considered graphs. 

\subsubsection*{Approximating the graph edit distance by assignments}
Computing the graph edit distance is an \cNP-hard problem and solving practical 
instances by exact approaches is often not feasible.
Therefore, \citet{Riesen2009a} proposed to derive a suboptimal edit path between
graphs from an optimal assignment of their vertices, where the assignment costs
also encode the local edge structure.
For two graphs $G$ and $H$ with vertices $U = \{u_1,\dots,u_n\}$ and 
$V = \{v_1,\dots,v_m\}$, an assignment cost matrix $\vec{C}$ is created according 
to
\begin{equation}\label{eq:cost_matrix}
\vec{C}=
 \begin{bmatrix}[cccc|cccc]
c_{1,1} & c_{1,2} & \cdots & c_{1,m} & c_{1,\epsilon} & \infty & \cdots  & \infty \\
c_{2,1} & c_{2,2} & \cdots & c_{2,m} & \infty & c_{2,\epsilon} & \ddots & \vdots \\
\vdots & \vdots & \ddots & \vdots & \vdots & \ddots & \ddots &\infty \\
c_{n,1} & c_{n,2} & \cdots & c_{n,m} & \infty & \cdots & \infty  & c_{n,\epsilon} \\

\addlinespace[-0.2\aboverulesep]\cmidrule[0.4pt](l{4pt}r{4pt}){1-8}\addlinespace[-1.0\belowrulesep]

c_{\epsilon,1} & \infty & \cdots & \infty & 0 & 0 & \cdots & 0 \\
\infty & c_{\epsilon,2} & \ddots & \vdots & 0 & 0 & \ddots & \vdots \\
\vdots & \ddots & \ddots & \infty & \vdots & \ddots & \ddots & 0 \\
\infty & \cdots & \infty & c_{\epsilon,m} & 0 & \cdots & 0  & 0 \\
\end{bmatrix},
\end{equation}
where the entries are estimations of the cost for substituting, deleting and 
inserting vertices in $G$.
In more detail, the entry $c_{i,\epsilon}$ is the cost for deleting $u_i$
increased by the costs for deleting the edges incident to $u_i$. 
The entry $c_{\epsilon,j}$ is the cost for inserting $v_j$ and all edges 
incident to $v_j$. Finally $c_{i,j}$ is the cost made up of
the cost for substituting the vertex $u_i$ by $v_j$ and the cost of an optimal assignment 
between the incident edges w.r.t.\@ the edge substitution, deletion and insertion
costs. 
An optimal assignment for $\vec{C}$ allows to derive an edit path between $G$ and 
$H$. Its cost is not necessarily minimum possible, but \citet{Riesen2009a} show
experimentally that this procedure leads to a sufficiently good approximation of 
the graph edit distance for many real-world problems.

The costs derived by \citet{Riesen2009a} are directly related to edit costs
of various operations on the graph, but unfortunately these costs are not
suitable for our optimal assignment strategy, which must utilise a tree metric.
For this reason, we use a different set of costs, described in Section~\ref{sec:linear_ged}.
The optimal assignment is recovered using the method described in the next section.
This assignment then induces an edit path which is used to compute a good
approximation to the edit distance.

\section{Optimal assignments under a tree metric}
\label{optimal}
We consider the assignment problem under the assumption that the costs are
derived from a tree metric and propose an efficient algorithm for constructing 
a solution. For a dataset of sets of objects we obtain a distance-preserving 
embedding of the pairwise optimal assignment costs into an $\ell_1$ space.

\subsection{Structural results}
\begin{figure*}

\tikzstyle{vertex}=[circle, fill=black, scale=.65]
\tikzstyle{setA}=[diamond, thick, draw=black, scale=.6,fill=orange, node distance=.7cm]
\tikzstyle{setB}=[circle, thick, draw=black, scale=.8,fill=blue, node distance=.7cm]
\tikzstyle{edge} = [draw,thick,-]
\tikzstyle{dots} = [line width=1.2pt, line cap=round, dash pattern=on 0pt off 3\pgflinewidth]

\begin{subfigure}[b]{0.4\textwidth}\centering
	\begin{tikzpicture}[auto,scale=.7]

	\node[vertex,label=left:{$t$}]  (t) at (-1,-.7) {};
	\node[vertex,label=left:{$s$}]  (s) at (-1,.7) {};
	\node[vertex,label=above:{$u$}] (u) at (0,0) {};
	\node[vertex,label=above:{$v$}] (v) at (1.2,0) {};
	\node[vertex,label=right:{$x$}] (x) at (2.2,-.7) {};
	\node[vertex,label=right:{$w$}] (w) at (2.2,.7) {};

	\node[below of = t, node distance=.4cm] {
	\tikz\node[setA] (aA1)  {};
	\tikz\node[setA] (aA2)  {};
	\tikz\node[setB] (aB1)  {};
	};
	\node[above of = s, node distance=.4cm] {
	\tikz\node[setB] (bB1)  {};
	\tikz\node[setB] (bB2)  {};
	};
	\node[below of = x, node distance=.4cm] {
	\tikz\node[setB] (sB1)  {};
	};
	\node[above of = w, node distance=.4cm] {
	\tikz\node[setA] (tA1)  {};
	\tikz\node[setA] (tA2)  {};
	\tikz\node[setA] (tA3)  {};
	\tikz\node[setB] (tB3)  {};
	};

	\draw[edge] (t) -- (u);
	\draw[edge] (s) -- (u);
	\draw[edge] (u) -- (v);
	\draw[edge] (v) -- (w);
	\draw[edge] (v) -- (x);
	\end{tikzpicture}

\subcaption{Tree representing a metric}
\label{fig:tree_metric_assignment:a}

\end{subfigure}
\begin{subtable}[b]{0.6\textwidth}

  \begin{tabular}{l|ccccc|ccccc}
    \toprule
    \diagbox[width=5em]{\small{Set}}{\small{Index}} &
    $\overleftarrow{su}$ & 
    $\overleftarrow{tu}$ & 
    $\overleftarrow{uv}$ & 
    $\overleftarrow{vw}$ & 
    $\overleftarrow{vx}$ &
    $\overrightarrow{su}$ & 
    $\overrightarrow{tu}$ & 
    $\overrightarrow{uv}$ & 
    $\overrightarrow{vw}$ & 
    $\overrightarrow{vx}$ \\
    \midrule
    $A$ \tikz\node[setA](){}; & 0 & 2 & 2 & 2 & 5 & 5 & 3 & 3 & 3 & 0 \\
    $B$ \tikz\node[setB](){}; & 2 & 1 & 3 & 4 & 4 & 3 & 4 & 2 & 1 & 1 \\
    \bottomrule
  \end{tabular}

\subcaption{Partition size w.r.t.\@ oriented edges}
\label{fig:tree_metric_assignment:b}
\end{subtable}
\caption[]{
A tree representing a metric and the objects of an assignment instance
associated to its nodes. In this instance $A=\{a_1,a_2,a_3,a_4,a_5\}$ and
these objects are labelled $(t,t,w,w,w)$. Similarly the five objects in $B$ are
labelled $(s,s,t,w,x)$. The objects are associated to tree nodes by the map
$\varrho$, and \tikz\node[setA](){}; 
denotes the elements of the set $A$ and \tikz\node[setB](){}; of $B$, respectively.}
\label{fig:tree_metric_assignment}
\end{figure*}
Let $(A, B, T,\varrho)$ be an assignment instance  as described in 
Section~\ref{assignmentproblem}.
We associate the objects of $A$ and $B$ with the nodes of the tree $T$ according 
to the map $\varrho$, cf.~Figure~\ref{fig:tree_metric_assignment:a}.
An assignment $M$ between the objects of $A$ and $B$ is associated with a
collection of paths $\mathcal{P}$ in $T$, such that there is a bijection
between pairs $(a,b) \in M$ and paths 
$(\varrho(a), \dots, \varrho(b)) \in \mathcal{P}$.
In particular, the cost of the assignment equals the sum of path lengths, i.e.,
\begin{equation}\label{eq:path_collection}
c(M) = \sum_{P \in \mathcal{P}}\sum_{e \in P} w(e).
\end{equation}
We do not construct the set $\mathcal{P}$ explicitly, but use this notion to 
develop efficient methods and prove their correctness.
Using Eq.~\eqref{eq:path_collection}, we can attribute the total costs of
an optimal assignment to the individual edges by counting how often they occur 
on paths.
Deleting an arbitrary edge $uv$ yields two connected components, one containing the node
$u$ and the other containing $v$. Let $A_{\overleftarrow{uv}}$ and 
$A_{\overrightarrow{uv}}$ denote the number of objects in $A$ associated by 
$\varrho$ with nodes in the connected component containing $u$ and $v$, 
respectively, cf.~Figure~\ref{fig:tree_metric_assignment:b}.

\begin{lemma}\label{lem:edges}
Let $\mathcal{P}$ be the collection of paths associated with an optimal 
assignment between $A$ and $B$ under a cost function represented by the weighted
tree $T$.
Each edge $uv$ in $T$ appears 
$|A_{\overleftarrow{uv}} - B_{\overleftarrow{uv}}|=|A_{\overrightarrow{uv}} - B_{\overrightarrow{uv}}|$
times in a path in $\mathcal{P}$.
\end{lemma}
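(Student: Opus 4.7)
The plan is a double-counting lower bound followed by an exchange argument for optimality. First, since $|A|=|B|$, the identities $A_{\overleftarrow{uv}}+A_{\overrightarrow{uv}}=|A|$ and $B_{\overleftarrow{uv}}+B_{\overrightarrow{uv}}=|B|$ yield $A_{\overleftarrow{uv}}-B_{\overleftarrow{uv}}=-(A_{\overrightarrow{uv}}-B_{\overrightarrow{uv}})$, so the two absolute values in the statement coincide; it therefore suffices to prove the claim for $|A_{\overleftarrow{uv}}-B_{\overleftarrow{uv}}|$.

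Next, removing $uv$ disconnects $T$ into subtrees $T_u$ and $T_v$, and a path in $\mathcal{P}$ uses $uv$ exactly once iff its endpoints lie in distinct components. Let $k$ (resp.\ $k'$) count the pairs $(a,b)\in M$ with $\varrho(a)\in V(T_u)$ and $\varrho(b)\in V(T_v)$ (resp.\ with the orientations swapped). Counting the matched pairs that stay entirely inside $T_u$ from the $A$- and the $B$-side yields $A_{\overleftarrow{uv}}-k=B_{\overleftarrow{uv}}-k'$, i.e.\ $k-k'=A_{\overleftarrow{uv}}-B_{\overleftarrow{uv}}$, whence the total number of uses of $uv$ is $k+k'\geq |k-k'|=|A_{\overleftarrow{uv}}-B_{\overleftarrow{uv}}|$, with equality iff $\min(k,k')=0$.

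To obtain equality for an optimal $M$ I would argue by contradiction. If $k,k'>0$, I pick $(a_1,b_1),(a_2,b_2)\in M$ with $\varrho(a_1),\varrho(b_2)\in V(T_u)$ and $\varrho(a_2),\varrho(b_1)\in V(T_v)$, and let $M'$ be obtained by replacing them with $(a_1,b_2)$ and $(a_2,b_1)$. Since each original path crosses $uv$ exactly once, decomposing them through $u$ and $v$ yields
\begin{equation*}
c(a_1,b_1)+c(a_2,b_2)=d(\varrho(a_1),u)+d(u,\varrho(b_2))+d(v,\varrho(a_2))+d(v,\varrho(b_1))+2w(uv),
\end{equation*}
while the new pairs stay inside a single subtree, so the triangle inequality inside $T_u$ and $T_v$ gives
\begin{equation*}
c(a_1,b_2)+c(a_2,b_1)\leq d(\varrho(a_1),u)+d(u,\varrho(b_2))+d(v,\varrho(a_2))+d(v,\varrho(b_1)).
\end{equation*}
Subtracting, $c(M')\leq c(M)-2w(uv)<c(M)$, contradicting the optimality of $M$. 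Hence $\min(k,k')=0$ and the count is exactly $|A_{\overleftarrow{uv}}-B_{\overleftarrow{uv}}|$.

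The main obstacle I anticipate is showing that this swap strictly reduces the cost in an arbitrary tree metric, since the three possible rewirings of two pairs are only loosely ordered by the four-point condition and need not all be comparable in general. The crucial observation that makes the argument go through is that $uv$ is the unique edge of $T$ on every $T_u$--$T_v$ path, which both forces a clean decomposition of the two old crossing distances through $u$ and $v$ and guarantees that the two new pairs live entirely within a single subtree, producing the $2w(uv)$ slack that dominates the triangle-inequality defect.
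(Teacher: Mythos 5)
Your proof is correct and follows essentially the same route as the paper: a counting lower bound showing the imbalance $|A_{\overleftarrow{uv}}-B_{\overleftarrow{uv}}|$ forces at least that many crossings of $uv$, followed by an exchange of two oppositely-oriented crossing pairs that saves $2w(uv)$ and contradicts optimality. If anything, your treatment of the exchange step is slightly more careful—you bound the rerouted costs via the triangle inequality inside each subtree, whereas the paper asserts that all non-$uv$ edge multiplicities are preserved, which need not hold exactly (they can only decrease, which still yields the required strict improvement).
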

\begin{proof}
Splitting $T$ at an edge defines a bipartition of $A$ and $B$.
Since $|A| = |B|$ holds, it follows that 
$|A_{\overleftarrow{uv}} - B_{\overleftarrow{uv}}|=|A_{\overrightarrow{uv}} - B_{\overrightarrow{uv}}|$
for every edge $uv$. 
When $uv$ appears in $|A_{\overleftarrow{uv}} - B_{\overleftarrow{uv}}|$ assignment
paths, the maximum number of assignments is made within each subset, with all
of the smaller of $A$ and $B$ assigned within the subset.
We may assign at most $\min\{ A_{\overleftarrow{uv}}, B_{\overleftarrow{uv}} \}$ 
objects within the connected component containing $u$, and at least the remaining 
$|A_{\overleftarrow{uv}} - B_{\overleftarrow{uv}}|$ objects must be assigned to 
objects in the connected component containing $v$. Therefore, $uv$ appears at 
least this number of times in paths in $\mathcal{P}$.

It remains to be shown that the assignment cannot be optimal when the edge $uv$ 
is contained in more paths.
Assume $\mathcal{P}'$ corresponds to an optimal solution and contains the edge 
$uv$ more than $|A_{\overleftarrow{uv}} - B_{\overleftarrow{uv}}|$ times. 
Then, there are $a_1 \in A$ and $b_2 \in B$ in the connected component 
containing $u$, which are both assigned across the partition to elements 
$b_1 \in B$ and $a_2 \in A$, respectively, in the component containing $v$.
The corresponding assignment paths $P_1 = (a_1,\dots, u,v, \dots, b_1)$ and 
$P_2 = (b_2,\dots, u,v, \dots, a_2)$ are contained in $\mathcal{P}'$.
Consider the paths $P'_1 = (a_1,\dots, u, \dots, b_2)$ and 
$P'_2 = (a_2,\dots, v, \dots, b_1)$, which both do not contain $uv$.
The collection of paths $\mathcal{P}'' =\mathcal{P}' \cup \{P'_1, P'_2\} \setminus \{P_1, P_2\}$
also defines an assignment, where the edges are contained in the same number of 
paths with exception of the edge $uv$, which appears in two paths less.
Since $w(uv)>0$, the associated solution has cost $c(\mathcal{P}'') = c(\mathcal{P}') - 2w(uv)$
and hence $\mathcal{P}'$ cannot correspond to an optimal solution, contradicting
the assumption. 
\end{proof}

This result allows us to compute the optimal assignment cost as a weighted sum
over the edges in the tree representing the cost metric.

\begin{theorem}\label{thm:oa_dist}
Let $(A,B,T,\varrho)$ be an assignment instance with tree edge weights $w$. The cost of an optimal assignment is
$$D^c_\mathrm{OA}(A,B)  = \sum_{uv \in E(T)} |A_{\overleftarrow{uv}} - B_{\overleftarrow{uv}}| \cdot w(uv).$$
\end{theorem}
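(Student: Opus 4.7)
The plan is to derive the formula directly from Lemma~\ref{lem:edges} together with the path decomposition in Eq.~\eqref{eq:path_collection}. Let $M^*$ be any optimal assignment and $\mathcal{P}^*$ its associated collection of paths in $T$. Starting from
$$c(M^*) = \sum_{P \in \mathcal{P}^*} \sum_{e \in P} w(e),$$
I would swap the order of summation so that the total cost is reorganised as a sum over edges, each weighted by the number of paths in $\mathcal{P}^*$ that traverse it. Formally, letting $n_{\mathcal{P}^*}(e)$ denote the number of paths in $\mathcal{P}^*$ containing $e$, this gives
$$c(M^*) = \sum_{uv \in E(T)} n_{\mathcal{P}^*}(uv)\cdot w(uv).$$

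By Lemma~\ref{lem:edges} applied to the optimal collection $\mathcal{P}^*$, each edge $uv$ contributes exactly $|A_{\overleftarrow{uv}} - B_{\overleftarrow{uv}}|$ to this count, so substitution yields the claimed formula. Since $D^c_\mathrm{OA}(A,B) = c(M^*)$ by definition, this completes the argument.

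The only subtlety worth addressing is ensuring that Lemma~\ref{lem:edges} really delivers an equality rather than just a lower bound per edge: the lemma shows both that $n_{\mathcal{P}^*}(uv) \ge |A_{\overleftarrow{uv}} - B_{\overleftarrow{uv}}|$ must hold (so no assignment can beat the right-hand side) and that any strictly larger value at some edge can be reduced by two via a local swap, contradicting optimality. Hence all edge counts coincide with the imbalance $|A_{\overleftarrow{uv}} - B_{\overleftarrow{uv}}|$ simultaneously in $\mathcal{P}^*$, which is exactly what makes the sum collapse to the stated closed form. I do not expect any genuine obstacle here — the theorem is essentially a bookkeeping consequence of the structural lemma, and the proof should fit in a few lines.
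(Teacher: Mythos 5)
Your proposal is correct and follows exactly the paper's route: the paper's own proof is the one-line observation that the theorem ``directly follows from Eq.~\eqref{eq:path_collection} and Lemma~\ref{lem:edges}'', and your argument simply spells out that derivation (swapping the summation order and substituting the per-edge count from the lemma). No gaps; the expanded bookkeeping is exactly what the paper leaves implicit.
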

\begin{proof}
 Directly follows from Eq.~\eqref{eq:path_collection} and Lemma~\ref{lem:edges}.
\end{proof}

\subsection{Constructing an optimal assignment}

In order to compute an optimal assignment, and not just its cost, we again 
associate the objects of $A$ and $B$ with the nodes of the tree $T$.
Then we pick an arbitrary leaf $v$ and match the maximum possible number of 
elements between the subsets of $A$ and $B$ associated with $v$.
The remaining objects are passed to its neighbour and the  considered leaf is 
deleted. Iterating the approach until the tree is eventually empty, yields an 
assignment between all objects of $A$ and $B$.
Algorithm~\ref{alg:compute_assignment} implements this approach. 

\SetKwFunction{PICK}{PickElement}
\SetKwFunction{NEIGHBOUR}{Neighbour}
\SetKwFunction{PAIREL}{PairElements}
\begin{algorithm}[tb]
  \caption{Optimal assignment from a cost tree.}
  \label{alg:compute_assignment}
  \Input{Assignment instance $(A,B,T,\varrho)$.}
  \Data{Subsets $A_v \subseteq A$ and $B_v \subseteq B$ for each vertex $v$ of $T$, partial assignment $M$.}
  \Output{An optimal assignment $M$.}
  \BlankLine

  \lForAll(\note*[f]{Initialisation}){$a \in A$} {
    $A_{\varrho(a)} \gets A_{\varrho(a)} \cup \{a\}$
  }
  \lForAll{$b \in B$} {
    $B_{\varrho(b)} \gets B_{\varrho(b)} \cup \{b\}$
  }
  
  $M \gets \emptyset$ \;
  \While{$V(T) \neq \emptyset$} {
    $v \gets $ arbitrary vertex in $T$ with degree $1$ \label{alg:order}\;
    $M \gets M \cup \PAIREL(A_v,B_v)$ \label{alg:pair_elements:M}\;
    $n \gets \NEIGHBOUR(v)$ \note*[r]{Get distinct neighbour}
    $A_n \gets A_n \cup A_v$ \label{alg:compute_assignment:pass_a}\note*[r]{Pass unmatched objects}
    $B_n \gets B_n \cup B_v$ \label{alg:compute_assignment:pass_b}\;
    $T \gets T \setminus v$ \note*[r]{Remove the node $v$ from $T$}
  }
  \Procedure{\PAIREL{$A_v,B_v$}}{
    $X \gets \emptyset$ \;
    \While{$A_v \neq \emptyset$ \KwAnd $B_v \neq \emptyset$} {
      $a \gets \PICK(A_v)$; \ $A_v \gets A_v\setminus\{a\}$ \label{alg:compute_assignment:poll_a}\;
      $b \gets \PICK(B_v)$; \ $B_v \gets B_v\setminus\{b\}$ \label{alg:compute_assignment:poll_b}\;
      $X \gets X \cup \{(a,b)\}$ \;
    }
    \Return $X$ \;
  }
\end{algorithm}

\begin{theorem}\label{th:assignment_construction}
Algorithm~\ref{alg:compute_assignment} computes an optimal assignment in
time $O(n+t)$, where $n=|A|=|B|$ is the input size and $t=|V(T)|$ the size of 
the tree $T$.
\end{theorem}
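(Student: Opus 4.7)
The plan is to prove correctness via a structural invariant about the algorithm's state at each iteration, then conclude optimality via Theorem~\ref{thm:oa_dist}. The key invariant I would establish by induction on the number of iterations is: at the moment the while loop selects a vertex $v$ (with unique remaining neighbour $n$ in the current tree), $A_v$ and $B_v$ contain precisely those objects in $A$ and $B$, respectively, whose original $\varrho$-image lies on $v$'s side of the edge $vn$ in the original $T$. The inductive step relies on the observation that leaf removal preserves the subtree structure of the shrinking current tree $T'$, so when $v$ becomes a leaf in $T'$, every other original-tree node on $v$'s side of $vn$ has already been processed, and its contents have ultimately been routed into $A_v,B_v$ through the successive pass-to-neighbour steps.

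Given this invariant, the analysis proceeds edge-by-edge. When $v$ is processed, \textsc{PairElements} creates $\min(|A_v|,|B_v|)$ pairs at $v$; both endpoints of any such pair lie on the $v$-side of $vn$, so the tree-path between them does not traverse $vn$. The remaining $\bigl||A_v|-|B_v|\bigr|$ leftover objects are passed to $n$ and must eventually be matched to objects on the $n$-side (no $v$-side objects remain after $v$ is deleted), so each such pair contributes exactly one traversal of $vn$. The invariant identifies the leftover count with $|A_{\overleftarrow{vn}}-B_{\overleftarrow{vn}}|$, so summing edge-by-edge yields $c(M)=\sum_{uv\in E(T)}|A_{\overleftarrow{uv}}-B_{\overleftarrow{uv}}|\cdot w(uv)$, which equals $D^c_\mathrm{OA}(A,B)$ by Theorem~\ref{thm:oa_dist}.

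For the time bound I would maintain the set of current leaves in a queue (each node is enqueued and dequeued at most once, contributing $O(t)$ overall, with constant-time degree updates as neighbours are removed), and represent $A_v,B_v$ as singly-linked lists with tail pointers so that the unions $A_n\gets A_n\cup A_v$ and $B_n\gets B_n\cup B_v$ run in $O(1)$. Since every object participates in \textsc{PairElements} at most once, the total cost of all such calls is $O(n)$. Combining with $O(n)$ initialisation and $O(t)$ tree bookkeeping yields the claimed $O(n+t)$ bound.

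The main obstacle is formalising the invariant for arbitrary leaf-extraction orders: one must argue that at processing time the set $V_v$ of original $T$-nodes whose objects reside in $A_v\cup B_v$ is exactly the component of $T-vn$ containing $v$. I would handle this by maintaining throughout the algorithm the stronger statement that the sets $\{V_w\}_{w\in V(T')}$ form a partition of $V(T)$ into connected subtrees with $w\in V_w$, and observing that the pass step merges a removed leaf's $V$-set into its unique surviving neighbour's, preserving both the partition and its connectivity; once this is in place, the component identity for the chosen leaf follows because $v$ is the only $T'$-vertex on its side of $vn$.
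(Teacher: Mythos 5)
Your proof is correct and takes essentially the same route as the paper's: construct the assignment by routing unmatched objects leaf-by-leaf, argue that each edge $uv$ is traversed exactly $|A_{\overleftarrow{uv}}-B_{\overleftarrow{uv}}|$ times so that optimality follows from Lemma~\ref{lem:edges} (equivalently Theorem~\ref{thm:oa_dist}), and obtain $O(n+t)$ via constant-time list concatenation. You merely make explicit the connected-subtree invariant and the data-structure details that the paper's proof leaves implicit.
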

\begin{proof}
Since $|A|=|B|$ and every object of $A$ is associated with exactly one object of $B$, the
algorithm constructs an assignment.
The cost of the assignment corresponds to the number of objects that
are passed to neighbours along the weighted edges in lines~\ref{alg:compute_assignment:pass_a}
and \ref{alg:compute_assignment:pass_b}.
Whenever a node $v$ is processed in the while-loop, it has exactly one remaining 
neighbour $n$.
Since $v$ is deleted after the end of the iteration, objects are passed along 
the edge $nv$ only in this iteration. After calling the procedure \PAIREL in
line~\ref{alg:pair_elements:M} either$A_v$ or $B_v$ or both are empty. 
Since all objects in the connected component of $T \setminus nv$ that contains $v$ 
must have been passed to $v$ in previous iterations, exactly 
$|A_{\overleftarrow{nv}} - B_{\overleftarrow{nv}}|$ objects are passed to $n$. 
This is the number of occurrences of $nv$ in every optimal solution according 
to Lemma~\ref{lem:edges}. Therefore, $M$ is an optimal assignment.

The total running time over all iterations for the procedure \PAIREL is $O(n)$, 
the size of the assignment. 
All the other individual operations within the while-loop can be implemented in 
constant time when using linked lists to store and pass the objects.
Therefore the while-loop and the entire algorithm run in $O(n+t)$ total time. 
\end{proof}

Every optimal assignment can be obtained by Algorithm~\ref{alg:compute_assignment} 
depending on the order in which the objects are retrieved by \PICK in 
line~\ref{alg:compute_assignment:poll_a} and~\ref{alg:compute_assignment:poll_b}.

\subsection{Improving the running time}
We consider the setting, where the map $\varrho$ and the weighted tree $(T,w)$ 
encoding the cost metric $c$ are fixed and the distance 
$D^c_\mathrm{OA}$ should be computed for a large number of pairs.
The individual assignment instances possibly only populate a small fraction of 
the nodes of $T$ and only a small subtree may be relevant for 
Algorithm~\ref{alg:compute_assignment}. 
We show that this subtree can be identified efficiently.

Given a tree $T$ and a set $N \subseteq V(T)$, let $T_N$ denote the minimal
subtree of $T$ with $N \subseteq V(T_N)$.
\begin{lemma}\label{lem:relevant_subtree}
  Given a tree $T$ and a set $N \subseteq V(T)$, the subtree $T_N$ can be 
  computed in time $O(|V(T_N)|)$ after a preprocessing step of time $O(|V(T)|)$.
\end{lemma}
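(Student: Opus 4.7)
The plan is to root $T$ once during preprocessing, combining a parent-pointer DFS with a constant-time least common ancestor oracle, and then at query time walk upward from each vertex of $N$ toward their common ancestor, marking vertices as we go.

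\textbf{Preprocessing.} A single DFS on $T$ in $O(|V(T)|)$ time yields parent pointers; on top of this I would build the Bender--Farach-Colton LCA data structure in $O(|V(T)|)$ time, answering $\mathrm{lca}$ queries in $O(1)$. I also allocate a Boolean flag array of size $|V(T)|$, initialised once at the start and reset lazily after each query by remembering which entries were set.

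\textbf{Query.} Given $N$, first compute $r := \mathrm{lca}(N)$ by folding pairwise LCA queries across the elements of $N$ in $O(|N|)$ time; this vertex is the root of $T_N$. After marking $r$, iterate over $v \in N\setminus\{r\}$ and walk upward via parent pointers, marking each newly visited vertex and recording the traversed edge into $T_N$, stopping as soon as an already-marked vertex is reached (which is guaranteed at or before $r$). Finally, reset the flag array by iterating only over the vertices marked during the query.

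Correctness is immediate: every visited vertex lies on some $v$-to-$r$ path with $v\in N$, hence in $V(T_N)$, and conversely $T_N$ is precisely the union of these paths, so every vertex of $T_N$ ends up marked; the collected edges are then exactly $E(T_N)$.

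The key running-time point, and the main obstacle, is a charging argument: each walk terminates the instant it enters already-marked territory, so globally every vertex of $T_N$ is touched a bounded number of times, namely once when it becomes marked and at most once as a termination point of a later walk. Combined with the $O(|N|)\subseteq O(|V(T_N)|)$ cost of the LCA fold and the $O(|V(T_N)|)$ cleanup of the flag array, the query cost is $O(|V(T_N)|)$. Cutting the walks off at the precomputed $r$ is essential, because otherwise the first walks would ascend past $T_N$ toward the root of $T$, inflating the running time to depend on the depth of $T$ rather than on $|V(T_N)|$.
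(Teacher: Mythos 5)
Your argument is correct and meets both stated bounds, but it resolves the central difficulty by a genuinely different mechanism than the paper. Both proofs share the same skeleton---root $T$ once in preprocessing, then at query time walk upward from each $v\in N$ with lazy marking, terminating a walk as soon as it enters already-marked territory---and both identify the same obstacle: without a further stopping rule the first walks would overshoot the root of $T_N$ and climb toward the root of $T$. You solve this by importing a $\langle O(|V(T)|),O(1)\rangle$ LCA oracle, folding it over $N$ to obtain $r=\mathrm{lca}(N)$ in $O(|N|)$ time, and truncating every walk at $r$. The paper instead uses nothing beyond the BFS depths computed in preprocessing: walks are cut off at $d_{\min}=\min\{\depth(v)\mid v\in N\}$, and then the set of marked nodes at depth $d_{\min}$ is advanced one level at a time (replacing each by its parent) until it collapses to a single node, which is exactly $\mathrm{lca}(N)$; every node touched in either phase lies on a path between two members of $N$, so the total work is still $O(|V(T_N)|)$. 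The trade-off is that your query phase is conceptually simpler (a single pass bounded by a precomputed $r$) at the cost of a nontrivial preprocessing structure, whereas the paper's version is elementary and easy to implement but needs the extra two-phase argument that the level-by-level frontier convergence never leaves $V(T_N)$. One small presentational point: when a walk stops at an already-marked vertex you must still emit the edge into that vertex (your phrase ``recording the traversed edge'' covers this, but it is worth stating explicitly, since otherwise $E(T_N)$ would be missing the junction edges).
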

\begin{proof}
In the preprocessing step we pick an arbitrary node of $T$ as root and compute 
the depth of every vertex w.r.t.\@ the root using breadth-first search.
Let $d_{\min} = \min \{\depth(v) \mid v \in N\}$. For every node $v \in N$, we
\begin{inparaenum}[(i)]
  \item add $v$ to the result set $R$, and \label{alg:mark:step1}
  \item if the parent $p$ of $v$ is not in $R$ and $\depth(p) \geq d_{\min}$, 
        set $v$ to $p$ and continue with step~\eqref{alg:mark:step1}.
\end{inparaenum}
Let $R_{\min} =\{r \in R \mid \depth(r) = d_{\min}\}$. If $|R_{\min}| > 1$, add 
the parents of all $r \in R_{\min}$ to $R$, decrease $d_{\min}$ by one. Repeat 
this step until $|R_{\min}| = 1$.
Eventually, we have $N \subseteq R$ and $T_N=T[R]$. Every node in $R$ is 
processed only once and the running time is $O(|R|) = O(|V(T_N)|)$.
\end{proof}
Assuming that the tree and the depth of all nodes are given, the result directly
improves the running time of Algorithm~\ref{alg:compute_assignment} to $O(n+|V(T_N)|)$.

\subsection{Embedding optimal assignment costs}
We show how sets of objects can be embedded in a vector space such that the Manhattan 
distance between these vectors equals the optimal assignment costs between sets 
w.r.t.\@ a given cost function.
Let $\mathcal{D} = \{D_1,\dots, D_k\}$ be a dataset with 
$|D_i|=|D_j|$ for all $i, j \in \{1,\dots, k\}$.
Let the cost function $c$ between the objects $\bigcup_i D_i$ be determined by
a tree distance represented by the weighted tree $T$, and the map $\varrho$ from 
the objects to the nodes of the tree.
We consider the following map $\phi_c$ from $A\in\mathcal{D}$ to points 
in $\mathbb{R}^d$ with $d=|E(T)|$ and components indexed by the edges of $T$:
\begin{equation*}
\phi_c(A) = \left[A_{\overleftarrow{uv}} \cdot w(uv)\right]_{uv \in E(T)}. 
\end{equation*}
The Manhattan distance between these vectors is equal to the optimal assignment 
costs between the sets.
\begin{theorem}
Let $c$ be a cost function and $\phi_c$ defined as above, then
$D^c_\mathrm{OA}(A,B) = \norm{\phi_c(A) - \phi_c(B)}_1$.
\end{theorem}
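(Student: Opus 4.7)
The plan is to observe that this statement is essentially a direct corollary of Theorem~\ref{thm:oa_dist} combined with the definition of the embedding $\phi_c$. I would not need any new structural insight beyond what has already been established.

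First I would write out the $\ell_1$ norm of the difference using the component-wise definition of $\phi_c$:
\begin{equation*}
\norm{\phi_c(A) - \phi_c(B)}_1 = \sum_{uv \in E(T)} \bigl| A_{\overleftarrow{uv}} \cdot w(uv) - B_{\overleftarrow{uv}} \cdot w(uv) \bigr|.
\end{equation*}
Then I would factor out $w(uv)$ from inside each absolute value. The only thing to justify is that the edge weights are positive, which is part of the standing assumption on weighted trees representing metrics (stated in Section~II-B: $w : E(T) \to \mathbb{R}_{>0}$). This gives
\begin{equation*}
\norm{\phi_c(A) - \phi_c(B)}_1 = \sum_{uv \in E(T)} w(uv) \cdot \bigl| A_{\overleftarrow{uv}} - B_{\overleftarrow{uv}} \bigr|.
\end{equation*}

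Next I would invoke Theorem~\ref{thm:oa_dist}, which states that the right-hand side above equals $D^c_\mathrm{OA}(A,B)$, completing the proof.

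There is really no main obstacle; the interesting work was already done in Lemma~\ref{lem:edges} and Theorem~\ref{thm:oa_dist}, which attribute the assignment cost edge-wise and identify each edge's contribution as $|A_{\overleftarrow{uv}} - B_{\overleftarrow{uv}}|\cdot w(uv)$. The only subtle point worth mentioning explicitly is the orientation convention: the choice of which side of the edge is labelled $\overleftarrow{uv}$ versus $\overrightarrow{uv}$ is arbitrary, but the embedding $\phi_c$ must use a consistent orientation across both $A$ and $B$ for the factorisation step to work. With that convention fixed, the equality is immediate.
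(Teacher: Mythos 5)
Your proof is correct and follows exactly the same route as the paper's: expand the $\ell_1$ norm componentwise, factor out the positive weight $w(uv)$, and apply Theorem~\ref{thm:oa_dist}. The added remark about fixing a consistent edge orientation is a reasonable (if minor) clarification that the paper leaves implicit.
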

\begin{proof}
We calculate
\begin{align*}
 \norm{\phi_c(A) - \phi_c(B)}_1 &= \hspace{-1em} \sum_{uv \in E(T)} \hspace{-.6em} |A_{\overleftarrow{uv}} \cdot w(uv) - B_{\overleftarrow{uv}} \cdot w(uv)| \\
 &= \hspace{-1em} \sum_{uv \in E(T)} \hspace{-.6em} |A_{\overleftarrow{uv}} - B_{\overleftarrow{uv}}| \cdot w(uv) \\
 &= D^c_\mathrm{OA}(A,B),
\end{align*}%
where the last equality follows from the Theorem~\ref{thm:oa_dist}.
\end{proof}

This makes the optimal assignment costs available to fast indexing methods and 
nearest-neighbour search algorithms, e.g., following the locality sensitive
hashing paradigm.

\section{Approximating the graph edit distance in linear time}\label{sec:linear_ged}
We combine our assignment algorithm with the idea of \citet{Riesen2009a} to 
approximate the graph edit distance detailed in Section~\ref{sec:ged}.
To this end, we propose two methods for constructing a tree distance, such that 
the optimal assignment between the vertices of two graphs w.r.t.\@ to 
these distances is suitable for approximating the graph edit distance.
In order to quantify discrete and structural differences, we propose to use 
the Weisfeiler-Lehman method and, for graphs with continuous labels, hierarchical 
clustering. 
Note that both approaches can be combined to form a tree taking both, discrete 
and continuous labels, into account.
The tree distances we consider are in fact ultrametrics. 
We proceed by a discussion on how to cast the assignment formulation of 
\citet{Riesen2009a} with artificial elements that represent vertex insertion and 
deletion costs to an ultrametric.

\subsection{Ultrametric cost matrices for the graph edit distance}\label{sec:limits}
The cost matrix of Eq.~\eqref{eq:cost_matrix} is easily seen not to 
be in accordance with the strong triangle inequality. Consider the cost
matrix $\vec{C}$ obtained for a graph $G$ with $n$ vertices compared to 
itself.
Let $A=(a_1,\dots, a_{2n})$ and $B=(b_1,\dots, b_{2n})$ and consider 
$c(a_1,b_{n+2})\leq \max\{c(a_1,b_{n+1}),c(b_{n+1},b_{n+2})\}$. We have
$c(a_1,b_{n+2}) = \infty$ and $c(a_1,b_{n+1}) = c_{1,\epsilon}$ and 
$c(b_{n+1},b_{n+2})$ not specified by $\vec{C}$. However, 
$c(b_{n+1},b_{n+2}) \leq \max\{c(a_{n+1},b_{n+1}),c(a_{n+1},b_{n+2})\} = 0$
and, thus, a contradiction to the strong triangle inequality, unless 
$c_{1,\epsilon}=\infty$.
Therefore, we have to modify the definition of the cost matrix. The entries
$\infty$ in the upper right and lower left corner have been introduced with
the argument, that every vertex can be inserted and deleted at most once~\citep{Riesen2009a}.
This, however, is already guaranteed, since the assignment is a bijection.
We simplify the cost matrix as follows
\begin{equation}\label{eq:cost_matrix_ultra}
\vec{C}_\text{ultra}=
 \begin{bmatrix}[cccc|cccc]
c_{1,1} & c_{1,2} & \cdots & c_{1,m} & \tau & \tau & \cdots  & \tau \\
c_{2,1} & c_{2,2} & \cdots & c_{2,m} & \tau & \tau & \ddots & \vdots \\
\vdots & \vdots & \ddots & \vdots & \vdots & \ddots & \ddots &\tau \\
c_{n,1} & c_{n,2} & \cdots & c_{n,m} & \tau & \cdots & \tau  & \tau \\

\addlinespace[-0.2\aboverulesep]\cmidrule[0.4pt](l{4pt}r{4pt}){1-8}\addlinespace[-1.0\belowrulesep]

\tau & \tau & \cdots & \tau & 0 & 0 & \cdots & 0 \\
\tau & \tau & \ddots & \vdots & 0 & 0 & \ddots & \vdots \\
\vdots & \ddots & \ddots & \tau & \vdots & \ddots & \ddots & 0 \\
\tau & \cdots & \tau & \tau & 0 & \cdots & 0  & 0 \\
\end{bmatrix},
\end{equation}
where $\tau$ is the cost for vertex deletion and insertion. Moreover, we 
assume that the vertex substitution costs 
\begin{inparaenum}[(i)]
 \item satisfy $c_{i,j}\leq \tau$ for $i \in \{1,\dots,n\}$, $j \in \{1,\dots,m\}$, and
 \item can be represented by an ultrametric tree.
\end{inparaenum}
We can extend this tree for vertex insertion and deletion as defined by 
$\vec{C}_\text{ultra}$ by adding a node $x$ to the root, where the edge to the parent 
has weight $\tau$.
Just like the matrix~\eqref{eq:cost_matrix_ultra} contains additional rows and 
columns for vertex insertion and deletion, we associate artificial vertices
with the node $x$ via the map $\varrho$. Note that these can be matched at zero 
cost at $x$, which represents the bottom right submatrix of~\eqref{eq:cost_matrix_ultra}
filled with $0$.

\subsection{Weisfeiler-Lehman trees for discrete structural differences}
Weisfeiler-Lehman refinement, also known as colour refinement or na\"ive vertex 
classification, is a classical heuristic for graph isomorphism testing~\citep{Weisfeiler1968,Arvind2015}.
It iteratively refines the discrete labels of the vertices, called colours, 
where in each iteration two vertices with the same colour obtain different new 
colours if their neighbourhood differs w.r.t.\@ the current colouring.
More formally, given a graph $G$ with initial colours $\lab_0$,
a sequence $(\lab_1, \lab_2,\dots)$ of refined colours is computed, where 
$\lab_i$ is obtained from $\lab_{i-1}$ by the following procedure.
For every vertex $v\in V(G)$, sort the multiset of colours 
$\multiset{\lab_{i-1}(u) \mid uv \in E(G)}$ to obtain a unique sequence of colours 
and add $\lab_{i-1}(v)$ as first element. Assign a new colour $\lab_i(v)$ to every 
vertex $v$ by employing an injective mapping from colour sequences to new colours.
Since colours are preserved under isomorphism, a necessary condition for two graphs 
$G$ and $H$ to be isomorphic is 
\begin{equation}\label{eq:wl_iso_test}
 \multiset{\lab_i(v) \mid v \in V(G)} = \multiset{\lab_i(v) \mid v \in V(H)} \quad \forall i \geq 0\,,
\end{equation}
where for both graphs the same injective colour map is used.
Vice versa, the condition \eqref{eq:wl_iso_test} may be satisfied also for two non-isomorphic
graphs $G$ and $H$. 

Each colouring $\lab_i$ induces a partition $\mathcal{C}_i$ of $V(G)$, where the 
vertices with the same colour are in one cell.
The partition $\mathcal{C}_i$ is a refinement of the partition $\mathcal{C}_{i-1}$.
Therefore, colour refinement applied to a set of graphs under the same injective colour 
map yields a hierarchy of partitions of the vertices, which forms a tree. 
We perform colour refinement for a fixed number of $h$ iterations and consider the 
metric induced by the resulting tree. Let $\varrho$ associate the vertices of the graphs 
in the dataset with the node representing their final colour in the tree. 
Then the path length between two nodes in the tree represents the number of refinement 
steps in which the associated vertices have different colours.
Assuming that $h$ is fixed, we obtain a linear running time for approximating
the graph edit distance with Theorem~\ref{th:assignment_construction} and 
Lemma~\ref{lem:relevant_subtree}.

\subsubsection{Relation to Weisfeiler-Lehman graph kernels}

The  Weisfeiler-Lehman method has been used successfully to derive efficient and expressive 
graph kernels~\citep{Shervashidze2011,Kriege2016b}. 
The Weisfeiler-Lehman subtree kernel~\citep{Shervashidze2011} considers each colour as 
a feature and represents a graph by a feature vector, where each component 
counts the number of vertices in the graph having that colour in one iteration.
The Weisfeiler-Lehman optimal assignment kernel~\citep{Kriege2016b} applies the 
histogram intersection kernel to these feature vectors and is equal the optimal 
assignment similarity between the vertices. However, both approaches yield a
similarity measure that crucially depends on the number $h$ of refinement operations.
The similarity keeps changing in value with increasing $h$ even after the 
partition is stable.
Therefore, in classification experiments $h$ is typically determined by cross-validation 
in an computational expensive grid search, e.g., from the set $\{0,\dots,7\}$.
Our approach to approximate the graph edit distance, in contrast, is less sensitive 
to a particular choice of $h$ and can be expected to always benefit from more 
iterations.

\subsubsection{Optimality for amenable graphs}\label{sec:amenable}

We show that a modification of our algorithm actually constructs an isomorphism 
(resulting in an empty edit path with zero costs) between two isomorphic graphs under 
the following assumptions.
First, we assume that $h$ is chosen sufficiently large such that the refinement process 
converges, i.e., the stable partition $\mathcal{C}_h = \mathcal{C}_{h+1}$ is obtained.
Moreover, we assume that the graphs are \emph{amenable} to colour refinement, meaning
that the condition~\eqref{eq:wl_iso_test} is satisfied if and only if $G$ and $H$ are
isomorphic~\cite{Arvind2015}.
If all vertices have distinct colours, which is the case with high probability
for random graphs~\cite{Babai1980}, the graph is amenable and 
our approach constructs an isomorphism without any modification. 
However, this is, for example, not the case for graphs with non-trivial automorphisms, 
which may still be amenable. 
Our graph edit distance approximation proceeds by assigning vertices with the most 
specific colours to each other. For isomorphic graphs, there must be a choice for the 
module \PAIREL such that Algorithm~\ref{alg:compute_assignment} constructs an 
isomorphism, but it does not guarantee that this choice is made.
To achieve this, we have to modify two parts of Algorithm~\ref{alg:compute_assignment}: 
\begin{inparaenum}[(i)]
 \item the function \PAIREL must respect the edges of $G[A_v]$ and $H[B_v]$ when matching
 the vertices as well as the edges to vertices that were already mapped in previous steps,
 \item the ordering in which the leaves are selected in line~\ref{alg:order}
 must guarantee that the partial mapping can be extended to eventually form an isomorphism.
\end{inparaenum}
These steps can be implemented efficiently by inspection of the subgraphs induced by the 
individual colour classes as well as the bipartite graphs containing the edges between two 
colour classes. Due to space limitations we refrain from giving a detailed technical
description and refer the reader to the construction used by \citet{Arvind2015} (Proof of 
Theorem~9).
We denote by $\widehat{\mathit{GED}}(G,H)$ the approximate graph edit distance (assuming 
non-zero edit costs) obtained for $G$ and $H$ from the optimal assignment under the 
Weisfeiler-Lehman tree metric with Algorithm~\ref{alg:compute_assignment} modified as 
described above.

\begin{theorem}
Let $G$ and $H$ be graphs amenable to colour refinement, then 
$\widehat{\mathit{GED}}(G, H) = 0 \Leftrightarrow G$ and $H$ are isomorphic. 
\end{theorem}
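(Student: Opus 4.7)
The plan is to handle the two directions separately. The forward implication ($\Rightarrow$) will be essentially immediate from the assumption of non-zero edit costs, and crucially does \emph{not} require amenability. The backward implication ($\Leftarrow$) is the substantive part and is where amenability enters the argument: I will show that, when $G \cong H$, every step of the modified Algorithm~\ref{alg:compute_assignment} can be completed so that the partial assignment remains extendible to a full isomorphism, so that the algorithm eventually outputs an isomorphism, whose induced edit path is empty.

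For the forward direction I would argue that if $\widehat{\mathit{GED}}(G,H)=0$, then the edit path derived from the assignment contains no operations, since every single operation has strictly positive cost. An empty edit path from $G$ to $H$ means that the underlying assignment identifies the vertices of $G$ with those of $H$ in a way that preserves labels and edges; in other words, it is an isomorphism. Hence $G \cong H$ without any appeal to amenability.

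For the backward direction I would first observe that, by amenability and the hypothesis $G \cong H$, condition~\eqref{eq:wl_iso_test} holds after the refinement has stabilised. Thus in the Weisfeiler--Lehman tree each leaf (i.e.\ each final colour class) is associated with equally many vertices of $G$ and of $H$ via $\varrho$, so $|A_v|=|B_v|$ at every leaf. I would then show by induction on the while-loop iterations of the modified Algorithm~\ref{alg:compute_assignment} that the partial assignment $M$ constructed so far is a partial isomorphism $G[\text{dom}(M)] \to H[\text{image}(M)]$ that can be extended to a full isomorphism $G\to H$. In the inductive step at a leaf $v$, the modified \PAIREL\ has to pair the vertices of $A_v$ with those of $B_v$ in a way that respects the edges of $G[A_v]$ and $H[B_v]$ as well as the bipartite structure connecting these sets to already matched colour classes; the existence of such a pairing is exactly the content of the extension lemma that \citet{Arvind2015} use in their Theorem~9, and the ordering in line~\ref{alg:order} is chosen precisely so that the local matching is not forced into a dead end. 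Because $|A_v|=|B_v|$ at every leaf and the pairing is perfect, no unmatched objects are ever passed to a neighbour, the recursion proceeds cleanly up the tree, and the final $M$ is a bijection $V(G)\to V(H)$ that is an isomorphism. Hence the induced edit path is empty and $\widehat{\mathit{GED}}(G,H)=0$.

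The main obstacle is the extensibility argument inside \PAIREL: it must be shown that, for amenable graphs, a pairing of a single colour class that respects both the intra-class edges and the already committed matching of higher-depth classes always exists. This is where the hypothesis of amenability is essential, since for non-amenable graphs the local bipartite structure between colour classes may admit only pairings that are \emph{locally} consistent but cannot be glued into a global isomorphism. Invoking the construction of \citet{Arvind2015} circumvents a detailed case analysis and lets the rest of the proof reduce to bookkeeping about the flow of unmatched objects through the tree.
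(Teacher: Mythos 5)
Your proposal is correct and follows essentially the same route as the paper's own (much terser) proof: the forward direction from the positivity of edit costs, and the backward direction by invoking the construction of \citet{Arvind2015} (Theorem~9) to show the modified algorithm produces an isomorphism for amenable isomorphic graphs. The additional detail you supply (the induction over the while-loop and the observation that $|A_v|=|B_v|$ at every leaf so nothing is passed upward) is a faithful elaboration of what the paper leaves implicit.
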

\begin{proof}
Since $\widehat{\mathit{GED}}(G, H)$ is the cost of an edit path transforming 
$G$ to $H$, the implication follows directly. 
By the result of \citet{Arvind2015} we obtain that an isomorphism is constructed 
if $G$ and $H$ are isomorphic, which yields an edit path with cost $0$.
\end{proof}

\subsection{Hierarchical clustering for continuous labels}
We apply the bisecting $k$-means algorithm~\citep{Steinbach2000} 
to obtain a hierarchical clustering of the continuous vertex labels of all 
graphs. This is then used as the tree defining the assignment costs. 
We use Lloyd's algorithm~\citep{Lloyd1982} for each $2$-means problem and perform
bisection steps until a fixed number of $l$ leaves is created. 
The $k$-means algorithm is widely popular for its speed in practice, although its
complexity is exponential in the worst-case~\cite{Vattani2011}.
However, \citet{Duda2000} state that the number of iterations until the clustering 
stabilises is often linear or even sublinear on practical data sets.
In any case, hierarchical clustering algorithms with linear worst-case time complexity
are known~\citep{Aggarwal2013}.
Assuming that the clustering is performed in linear time, we also obtain a
linear running time for approximating the graph edit distance as above.

\section{Experimental Evaluation}
Our goal in this section is to answer the following questions experimentally.
\begin{enumerate}[topsep=0pt,itemsep=-1ex,partopsep=1ex,parsep=1ex,leftmargin=*,labelindent=0pt,label=\bfseries\itshape Q\arabic*]
  \item How does our approach scale w.r.t.\@ the graph and dataset size compared to 
        other methods?
  \item How accurately does it approximate the graph edit distance for common datasets?
  \item How does it perform regarding runtime and accuracy in classification tasks?
  \item How does our method compare to other approaches for graph classification?
\end{enumerate}

\subsection{Method}
We have implemented the following methods for computing or approximating the 
graph edit distance in Java using the same code base where possible.
\begin{description}[topsep=0pt,parsep=1ex,itemsep=-1ex]
 \item [Exact] Binary linear programming approach to compute the graph edit
       distance exactly~\citep{Lerouge2017}.
       We implemented the most efficient formulation (F2) and solved all instance 
       using Gurobi 7.5.2.
 \item [BP] Approximate graph edit distance using the Hungarian algorithm to solve 
       the assignment problem as proposed by~\citet{Riesen2009a}.
 \item [Greedy] The greedy graph edit distance proposed by~\citet{Riesen2015a} 
       solving the assignment problem by a row-wise greedy algorithm.
 \item [Linear] Our approach based on assignments under a tree distance.
       For graphs with discrete labels we used Weisfeiler-Lehman trees with $h=7$
       refinement steps, and bisecting $k$-means clustering with $l=300$ 
       leaves otherwise.
\end{description}
The experiments were conducted using Java v1.8.0 on an Intel Core i7-3770 CPU 
at 3.4GHz (Turbo Boost disabled) with 16GB of RAM. The methods BP, Greedy and 
Linear use a single processor only, the Gurobi solver for the Exact method was 
allowed to use all four cores with additional Hyper-Threading.

We used the graph classification benchmark sets contained in the IAM Graph 
Database~\citep{Riesen2008}\footnote{%
Please note that the statistics of the datasets may differ from the datasets 
used in~\citep{Riesen2009a}.} and the repository of benchmark datasets for graph 
kernels~\cite{KKMMN2016}.
The datasets AIDS, Mutagenicity and NCI1 represent small molecules and have discrete 
labels only. The Letter datasets have continuous vertex labels representing
2D coordinates and differ w.r.t.\@ the level of distortion,
(L)--low, (M)--medium and (H)--high. 
The statistics of these graph datasets are summarised in Table~\ref{tab:datasets}.
\begin{table}
	\begin{center}
	\caption{Dataset statistics and properties.}
	\label{tab:datasets}
	\setlength{\tabcolsep}{2pt}
	\begin{tabular}{@{}lrrrrccllc@{}}\toprule
		\multirow{2}{*}{\textbf{Data set}}  &\multicolumn{4}{c}{\textbf{Properties}}&\multicolumn{2}{c}{\textbf{Labels}}&\multicolumn{2}{c}{\textbf{Attributes}}&\multirow{2}{*}{\textbf{Ref.}}\\\cmidrule{2-9}
			&  Graphs& Classes  & $\varnothing |V|$ & $\varnothing |E|$ & Vertex & Edge & Vertex & Edge\\\midrule
	{AIDS} &  2000  & 2 & 15.69 & 16.20 & + & + & -- & -- & \citep{Riesen2008} \\
	{Letter (L)} &  2250  & 15 &  4.68  & 3.13  & -- & -- & + (2) & -- & \citep{Riesen2008} \\
	{Letter (M)} & 2250  & 15 &  4.67  & 4.50  & -- & -- & + (2) & -- & \citep{Riesen2008} \\
	{Letter (H)} & 2250  & 15 &  4.67  & 4.50  & -- & -- & + (2) & -- & \citep{Riesen2008} \\
	{Mutagenicity}  &  4337  & 2 &  30.32  &  30.77  & + & + & -- & -- & \citep{Riesen2008} \\
	{NCI1} &  4110  & 2 & 29.87 & 32.30 & + & -- & -- & -- & \citep{Shervashidze2011} \\
	\bottomrule
	\end{tabular}
	\end{center}
\end{table}
We used the predefined train, test and validation sets when available or 
generated them randomly using $\nicefrac{1}{3}$ of the objects for each set, 
balanced by class label.
We performed $k$-nearest neighbours classification based on the graph edit distance.
The costs for vertex insertion and deletion were both set to $\tau_\text{vertex}$ 
and the costs for insertion and deletion of edges were set to $\tau_\text{edge}$.
The costs for substituting vertices or edges are determined by the Euclidean 
distance in case of continuous labels. In case of discrete labels we assume
cost 0 for equal labels and 1 otherwise.
We use the validation set to select the parameters $k\in \{1,3,5\}$, 
$\tau_\text{vertex} \in\{0.1, 0.5, 0.9, 1.3, 1.7\}$ and 
$\tau_\text{edge} \in\{0.1, 0.5, 0.9, 1.3, 1.7\}$ by grid search.
The approach resembles the experimental settings used by~\citet{Riesen2009a}.
The reported runtimes were obtained using the selected parameters.
In order to systematically investigate the dependence of the runtime on the 
graph size we generated random graphs according to the Erd\H{o}s--R\'enyi model 
with edge probability $0.15$. For these experiments, we set 
$\tau_\text{vertex} = \tau_\text{edge} = 1$.
For the linear method, the runtimes include the time for constructing the tree.

For comparison with other approaches to graph classification, we used two graph 
kernels as a baseline. The \emph{GraphHopper kernel} (GH) \citep{Feragen2013} 
supports graphs with discrete and continuous labels by applying either the Dirac kernel or a
Gaussian kernel. The Weisfeiler-Lehman optimal assignment kernel (WLOA)~\citep{Kriege2016b} 
supports only graphs with discrete labels.
We used the $C$-SVM implementation LIBSVM~\cite{Chang2011}, selecting 
$C \in \{10^{-3},10^{-2},\dots, 10^3\}$ and $h \in \{0,1,\dots,7\}$ using
the validation set.

\subsection{Results}
We report on our experimental results and answer our research questions.

\subsubsection*{\textbf{Q1}}
\begin{figure}[t]
  \begin{subfigure}[c]{0.45\columnwidth}
    \includegraphics[scale=.77]{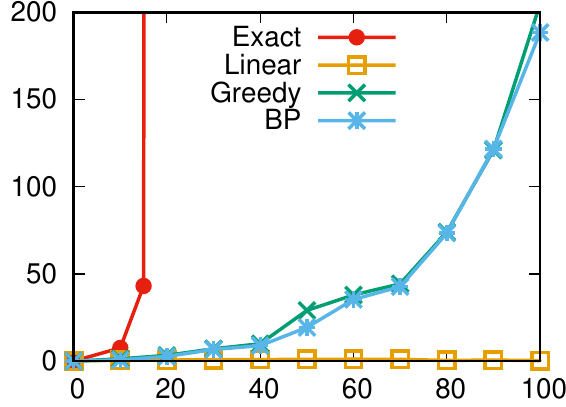}
    \subcaption{Graph size}\label{fig:scaling:graphsize}
  \end{subfigure}\hspace{.4cm}
  \begin{subfigure}[c]{0.45\columnwidth}
    \includegraphics[scale=.77]{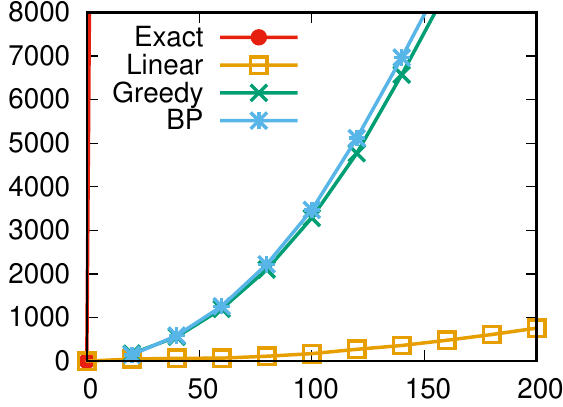}
    \subcaption{Dataset size}\label{fig:scaling:datasetsize}
  \end{subfigure}
  \caption{Runtime in milliseconds to (\subref{fig:scaling:graphsize}) compute 
           the graph edit distance between two random graphs; and 
           (\subref{fig:scaling:datasetsize}) between all pairs of graphs in a 
           dataset of random graphs.}
  \label{fig:scaling}
\end{figure}
Figure~\ref{fig:scaling} shows the growth of the runtime with
increasing graph and dataset size. Our method is the only one of those studied 
that scales to large graphs. The number of distance computations and thus the 
runtime of all methods grows quadratically with the dataset size. Even for the 
small random graphs on 15 vertices we generated, our method is more than one 
order of magnitude faster than other approximate methods.

\subsubsection*{\textbf{Q2}}
\begin{figure*}[t]\centering

  \rotatebox{90}{\hspace{-3.4em}Greedy vs.\@ Linear}\hfill
  \begin{subfigure}{0.19\textwidth}
    \includegraphics[scale=.75]{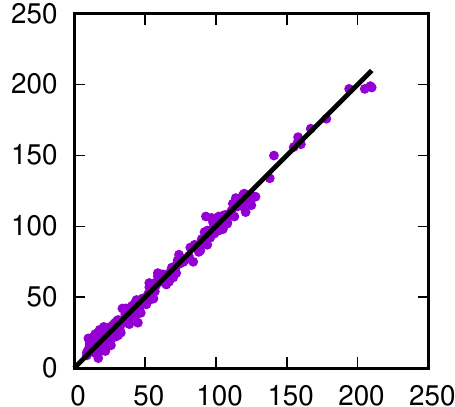}
  \end{subfigure}
  \begin{subfigure}{0.19\textwidth}
    \includegraphics[scale=.75]{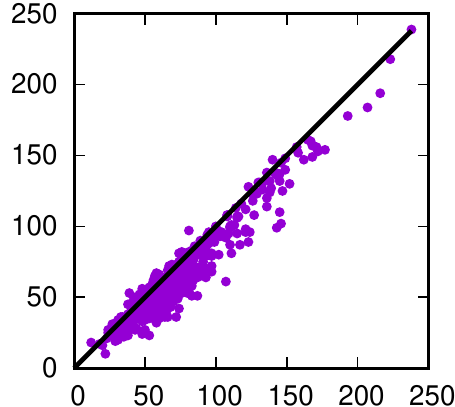}
  \end{subfigure}
  \begin{subfigure}{0.19\textwidth}
    \includegraphics[scale=.75]{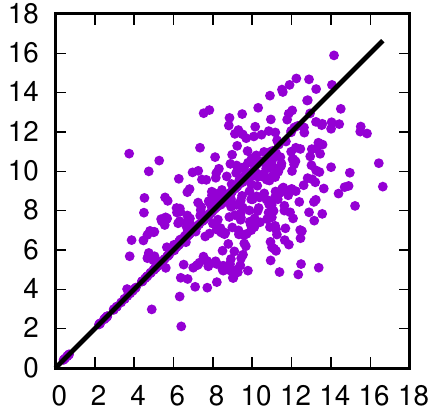}
  \end{subfigure}
  \begin{subfigure}{0.19\textwidth}
    \includegraphics[scale=.75]{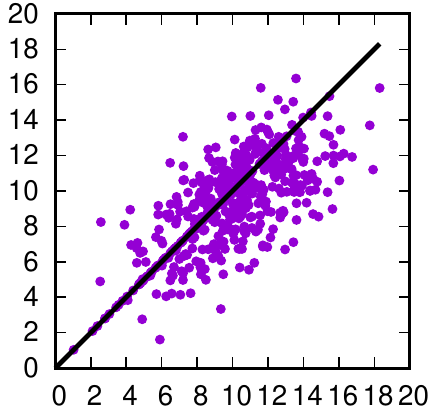}
  \end{subfigure}
  \begin{subfigure}{0.19\textwidth}
    \includegraphics[scale=.73]{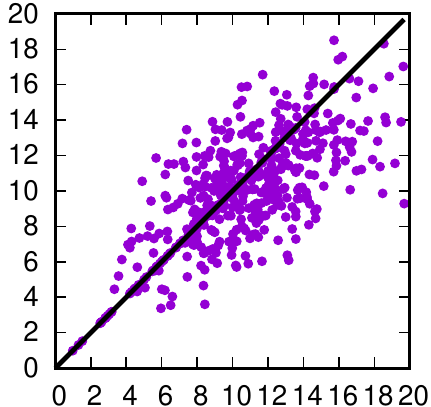}
  \end{subfigure}\vspace{.7em}

  \rotatebox{90}{\hspace{-2.4em}BP vs.\@ Linear}\hfill
  \begin{subfigure}{0.19\textwidth}
    \includegraphics[scale=.75]{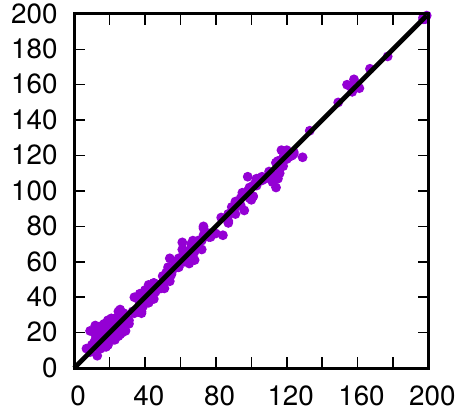}
  \end{subfigure}
  \begin{subfigure}{0.19\textwidth}
    \includegraphics[scale=.75]{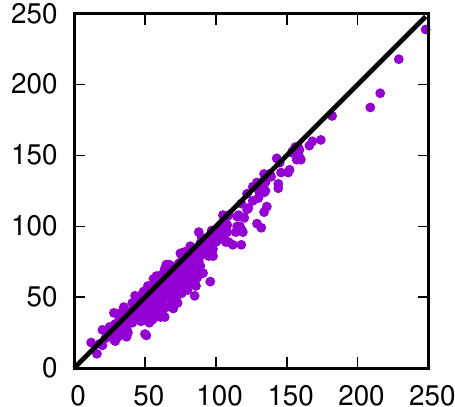}
  \end{subfigure}
  \begin{subfigure}{0.19\textwidth}
    \includegraphics[scale=.75]{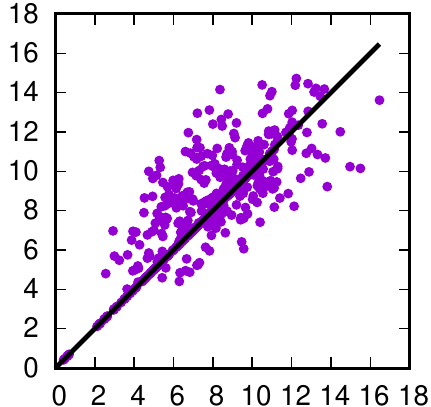}
  \end{subfigure}
  \begin{subfigure}{0.19\textwidth}
    \includegraphics[scale=.75]{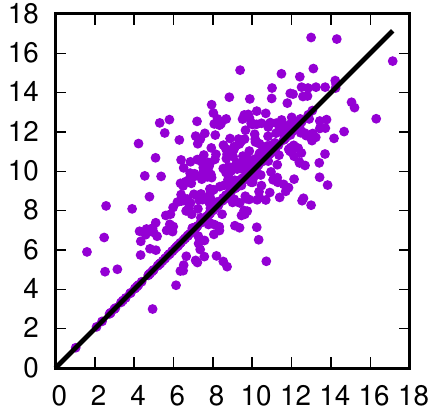}
  \end{subfigure}
  \begin{subfigure}{0.19\textwidth}
    \includegraphics[scale=.75]{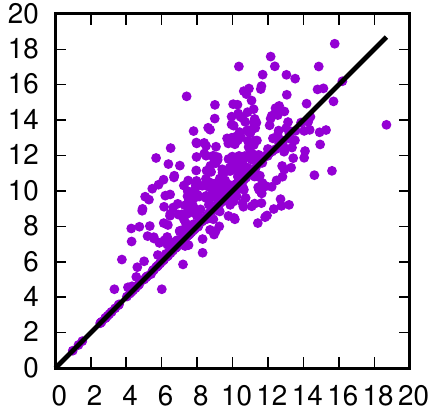}
  \end{subfigure}\vspace{.7em}

  \rotatebox{90}{\hspace{-2em}Exact vs.\@ Linear}\hfill
  \begin{subfigure}{0.19\textwidth}
    \includegraphics[scale=.75]{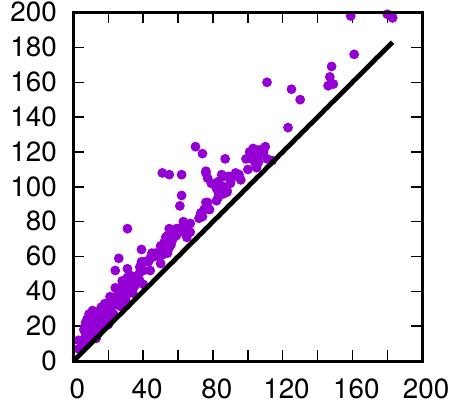}
    \subcaption{AIDS}\label{fig:approx:exact:aids}
  \end{subfigure}
  \begin{subfigure}{0.19\textwidth}
    \includegraphics[scale=.75]{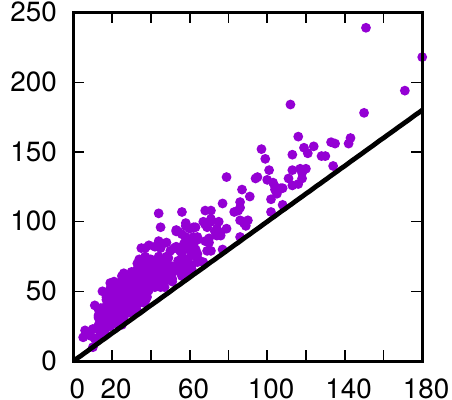}
    \subcaption{Mutagenicity}\label{fig:approx:exact:mutagenicity}
  \end{subfigure}
  \begin{subfigure}{0.19\textwidth}
    \includegraphics[scale=.75]{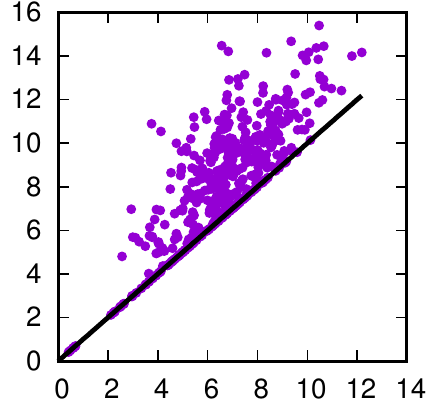}
    \subcaption{Letter (L)}\label{fig:approx:exact:letter_low}
  \end{subfigure}
  \begin{subfigure}{0.19\textwidth}
    \includegraphics[scale=.75]{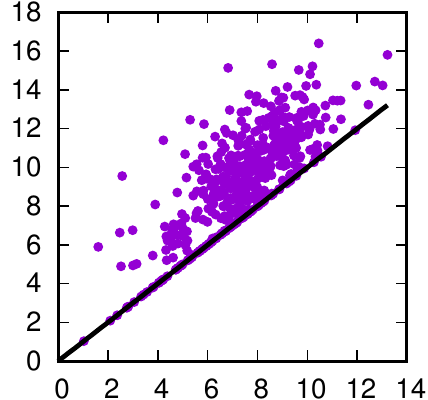}
    \subcaption{Letter (M)}\label{fig:approx:exact:letter_medium}
  \end{subfigure}
  \begin{subfigure}{0.19\textwidth}
    \includegraphics[scale=.75]{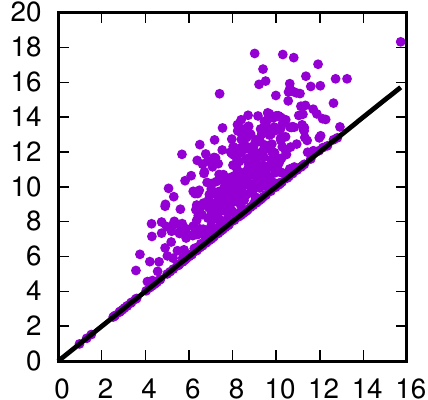}
    \subcaption{Letter (H)}\label{fig:approx:exact:letter_high}
  \end{subfigure}

  \caption{Graph edit distance computed by Linear ($y$-axis) and Exact/Greedy/BP
  ($x$-axis) for 500 randomly chosen pairs of graphs from the IAM graph datasets.}
  \label{fig:approx}
\end{figure*}

\begin{table*}\centering
  \begin{tabular}{lrrrrrr|rrrrrr}
    \toprule
    \multirow{2}{*}{\textbf{Dataset}}    &\multicolumn{6}{c}{\textbf{Accuracy}} &\multicolumn{6}{|c}{\textbf{Runtime}}\\\cmidrule{2-13}
                 & Exact &  BP  & Greedy &  Linear & GH$_\text{SVM}$ & WLOA$_\text{SVM}$ & Exact  &  BP  & Greedy & Linear & GH$_\text{SVM}$ & WLOA$_\text{SVM}$ \\
    \midrule
    AIDS         &---\ \ & 99.6 & 99.6   & 99.6    & 99.5  & 99.6 & ---\hspace{.8em} & $2\tmin13\tsec$ &  $1\tmin35\tsec$ &       $13\tsec$  &$2\thour2\tmin31\tsec$& $<1\tsec$\\
    Mutag.       &---\ \ & 70.7 & 70.8   & 74.4    & 72.3  & 80.8 & ---\hspace{.8em} & $1\thour21\tmin28\tsec$                & $56\tmin37\tsec$ & $3\tmin49\tsec$ &$>24\thour$& $19\tsec$\\
    Letter (L)   &  98.8 & 98.5 & 98.7   & 98.5    & 98.1  &---\ \ &  $8\tmin19\tsec$ &        $9\tsec$ &         $9\tsec$ &        $4\tsec$ & $36\tsec$ & ---\ \ \\
    Letter (M)   &  93.6 & 92.9 & 91.2   & 91.3    & 86.0  &---\ \ & $20\tmin38\tsec$ &       $10\tsec$ &         $9\tsec$ &        $4\tsec$ & $36\tsec$ & ---\ \ \\
    Letter (H)   &  88.4 & 88.1 & 87.2   & 85.2    & 79.2  &---\ \ & $40\tmin21\tsec$ &       $13\tsec$ &        $12\tsec$ &        $4\tsec$ & $52\tsec$ & ---\ \ \\
    NCI1         &---\ \ & 73.5 & 74.7   & 78.1    &---\ \ &81.5& ---\hspace{.8em} & $32\tmin46\tsec$& $26\tmin17\tsec$ &  $2\tmin7\tsec$ & $>24\thour$& $13\tsec$\\
    \bottomrule
  \end{tabular}
\caption{Accuracy and runtime for classifying the elements of the test set, including preprocessing if necessary.}
\label{tab:acc_time}
\end{table*}
To compare how accurately the graph edit distance is computed, we have selected 
500 pairs of graphs at random from each IAM dataset and computed their graph edit 
distance by all four methods.
Figure~\ref{fig:approx} shows how the distance computed by the Linear 
method compares to the distances obtained by the other three methods.
Points below the diagonal line represent pairs of graphs, for which the 
edit distance computed by Linear is actually smaller than the one
computed by the competing approach.
Compared to the Greedy approach the Linear method appears to give slightly better 
results on an average.
For the datasets Mutagenicity, Letter (L), (M) and (H) there are more points 
below the diagonal than above the diagonal.
When comparing to BP, this is still the case for the Mutagenicity dataset, but 
not for the Letter datasets.
This can be explained by the fact that continuous distances for several points 
cannot be represented by a tree metric without distortion.
In order to compare with the exact method on Mutagenicity and AIDS, we 
introduced a timeout of 100 seconds for each distance computation.
This was necessary since hard instances may require 
more than several hours. In case of a timeout the best solution found so far 
is used, which is not guaranteed to be optimal.
The Linear method shows a clear divergence from the solutions of the exact approach, 
in particular for pairs of graphs with a high (optimal) edit distance. 
However, it is likely that non-optimal solutions in this case do not harm a 
nearest neighbours classification.

\subsubsection*{\textbf{Q3}}
Table~\ref{tab:acc_time} summarises the results of the classification 
experiments. The Linear approach provides a high classification accuracy
comparable to BP and Greedy. For the dataset Mutagenicity and NCI1 it even 
performs better than the other approaches. This can be explained by the ability
of the Weisfeiler-Lehman tree to exploit more graph structure than BP.
For the Letter datasets, the Linear method is on a par with the other
methods for the version with low distortion, but performs slightly
worse when the distortion increases. 
This observation is in accordance with the approximation quality achieved for 
the datasets, cf.\@ Figure~\ref{fig:approx}.
The Linear method clearly outperforms all other approaches in terms
of runtime. This becomes in particular clear for the dataset 
Mutagenicity, which contains the largest graphs in the test with 30.32 vertices 
on an average.

\subsubsection*{\textbf{Q4}}
The GraphHopper kernel performs worse than our Linear approach w.r.t.\@
running time and classification accuracy. WLOA can only be applied to 
the molecular datasets with discrete labels. For these it performs 
exceptionally well regarding both accuracy and runtime. 
The result suggests that the notion of similarity provided by the graph 
edit distance is less suitable for this classification task.

\section{Conclusion}
We have shown that optimal assignments can be computed efficiently for tree 
metric costs. Although this is a severe restriction, we designed such
costs functions suitable for the challenging problem of graph matching.
Our approach allows to embed the optimal assignment costs in an $\ell_1$ space. 
It remains future work to exploit this property, e.g., for efficient nearest 
neighbour search in graph databases.

\section*{Acknowledgements}
This work was supported by the German Research Foundation (DFG) within the 
Collaborative Research Center SFB 876 ``Providing Information by 
Resource-Constrained Data Analysis'', project A6 ``Resource-efficient Graph 
Mining''.

\bibliographystyle{IEEEtranN}
\bibliography{lit}

\end{document}